\documentclass[CHICAGO,twocolumn]{USG}
\usepackage{anyfontsize}
\usepackage{etoolbox}

\graphicspath{{./images/}}

\articletype{RESEARCH ARTICLE}

\journal{Preprint}
\copyyear{2026}
\startpage{1}

\makeatletter
\historydates{}
\patchcmd{\@maketitle}{\includegraphics[width=26.5mm]{Wiley_logo}}{}{}{}
\def\@@articletype{%
  \rule[-4\@p@t]{4\@p@t}{15\@p@t}%
  {\arttypefont\bf\uppercase{\letterspace to 1.15\naturalwidth{\@articletype}}}%
}
\def\oddfoot@titlepage@info{%
  \vbox{%
    \hsize\textwidth
    \vskip-3\@p@t
    \noindent\rule{\textwidth}{.5\@p@t}\par
    \vskip2\@p@t
    {\fontsize{7}{8}\selectfont{\it\@journal}, \@copyyear\hfill{\pagenumfont\thepage\ of \@LastPg}}%
    \vskip10\@p@t
  }%
}
\makeatother

\begin{document}

\title{StochIKM: Automated High-Fidelity Reconstruction of Individual Patient Data from Published Kaplan--Meier Curves}

\author[1]{Liangze Ke}
\author[1]{Daniel Zelterman}
\author[2]{Wei Wei}
\author[3]{Guannan Gong}

\authormark{KE \textsc{et al.}}

\address[1]{\orgdiv{Department of Biostatistics, }\orgname{School of Public Health, Yale University, }%
\orgaddress{\state{New Haven, CT 06520, }\country{USA}}}

\address[2]{\orgdiv{Section of Medical Oncology, }\orgname{Yale School of Medicine and Yale Cancer Center, Yale University, }%
\orgaddress{\state{New Haven, CT 06520, }\country{USA}}}

\address[3]{\orgdiv{Department of Medicine, }\orgname{Yale University School of Medicine and Yale Cancer Center, }%
\orgaddress{\state{New Haven, CT 06520, }\country{USA}}}

\corres{Wei Wei (\email{wei.wei@yale.edu}) and Guannan Gong (\email{guannan.gong@yale.edu})}
\abstract[ABSTRACT]{Individual patient data are the gold standard for secondary analysis of
time-to-event evidence, but they are rarely released, and reconstruction from a
published Kaplan--Meier figure could help. Existing methods all claim
high fidelity, each judged by whichever agreements it happens to achieve, and
each returns a single dataset although the map from figure to data is not
bijective. We define fidelity explicitly: a dataset is admissible when it
reproduces the risk table exactly, follows the digitized curve to within the
image resolution, and respects the printed median, interval and event count. We propose StochIKM, a stochastic algorithm that digitizes the figure with a
vision-capable language model and uses the high-fidelity definition as constraints, returning an admissible set of individual patient datasets. A
solution is guaranteed to exist whenever the figure has been read correctly, so
a failure indicts either the published figure or its digitization. In simulation and in 79 arms from
published oncology trials, StochIKM reproduces the published risk tables and
medians more closely than LLM-improved IPDfromKM \citep{liu2021ipdfromkm}, a high-impact alternative.}

\maketitle

\section{Introduction}\label{sec:intro}
Individual patient data (IPD) are the currency of meta-analysis. For studies reporting time-to-event outcomes, access to IPD enables individual-participant-data meta-analysis \citep{stewart2002ipd,riley2010ipdma}; assessment of modelling assumptions \citep{royston2013rmst,uno2014beyond} and sensitivity analysis; extrapolation beyond the observed follow-up for the evaluation of health technologies \citep{latimer2013extrapolation};
hypothesis generation for future studies; and emulation of randomized comparisons between treatments never studied head to head \citep{hernan2016target,rahman2023external}.

The original IPD, however, are rarely accessible because of confidentiality and proprietary considerations. Even where data-sharing platforms exist, requests are often fulfilled for only a small number of applicants \citep{krumholz2016yoda,danchev2021sharing}. Reconstructing IPD from the published
Kaplan--Meier (K--M) curve offers a promising alternative, but the quality of such synthetic IPD is not guaranteed and requires careful evaluation. Where a ground truth is observed, as in supervised learning, accuracy is the distance from the truth. As the true IPD are rarely available, the question becomes how completely a reconstruction agrees with what the publication reports. Faith is ``the evidence of things not seen'' (Heb.~11:1, KJV). In information theory, \textit{fidelity} is defined as the expected distortion between a source and its reproduction under a stated distortion measure, and a code is judged against a fidelity criterion fixed in advance \citep{shannon1959fidelity,berger1971rate}. Although authors who reconstruct K--M figures have measured fidelity in different ways, the idea beneath is common to all: a reconstruction is faithful to the extent that it reproduces what the original publication reported.

Recovering information from published survival curves is not new.
\citet{parmar1998extracting} and
\citet{tierney2007practical} showed how a hazard ratio and its variance could be
recovered from a published curve together with the reported numbers at risk. \citet{guyot2012enhanced} introduced an algorithm that inverts the K--M product-limit formula one risk-table interval at a time, assuming that
censoring is spread uniformly within each interval, and so produces a
complete set of event and censoring times. \citet{liu2021ipdfromkm} refined that algorithm, correcting a boundary condition so that the K--M curve is mathematically plausible, and developed software that clinicians could use. Since then, reporting standards have improved the quality of survival figures \citep{pocock2002survival,morris2019kmunicate}, and vision-capable large language models (LLMs) are able to parse the elements of a figure simply by prompt engineering \citep{masry2022chartqa}. These developments reduce the burden of extracting information from published figures but do not resolve the statistical ambiguity inherent in reconstructing patient-level records from aggregate survival data.

Three methodological challenges remain. First, fidelity has been checked as an afterthought, against inconsistent criteria: agreement with the plotted
curve and the hazard ratio \citep{guyot2012enhanced}; with reported events, numbers
at risk, survival probabilities and medians \citep{liu2021ipdfromkm};
or with a synthetic ground truth \citep{zhao2025kmgpt}, rather than enforced as part of the reconstruction. Some reconstructed IPD therefore do not agree with the published risk table \citep{guyot2012enhanced,liu2021ipdfromkm}. Second, a published K--M curve does not identify the original IPD uniquely. Missing or overlapping censoring marks, sparse risk tables and the finite resolution of the figure admit many event and censoring configurations. Reported confidence intervals for the median carry additional information about sampling variability, but existing methods neither use them as constraints nor return more than a single IPD. Third, reconstruction depends heavily on digitization. Manual digitization is labor-intensive and prone to operator error \citep{liu2021ipdfromkm}, whereas automated extraction may degrade when figures are low-resolution rasters, curves overlap, or arms are distinguished only by line type or similar colours \citep{zhao2025kmgpt}.

To address these limitations, we propose StochIKM, an automated stochastic algorithm for generating high-fidelity IPD from published K--M curves. StochIKM makes three contributions to the three challenges. First, it defines fidelity by three constraints: each admissible IPD must reproduce the reported numbers at risk exactly, remain within a prespecified maximum absolute deviation from the extracted K--M curve, and agree within prespecified tolerances with the reported number of events, median survival and its confidence interval, when available. StochIKM enforces them within its loop. Second, StochIKM generates a set of $\Omega$ admissible datasets rather than one, so that the uncertainty of reconstruction can be propagated into downstream analysis. Third, under idealized conditions of internally consistent reporting, an error-free K--M figure and exact extraction of the plotted information, we establish the existence of at least one dataset satisfying the fidelity criteria.

\section{Methods}\label{sec:methods}
\subsection*{Step 1: Digitization}
StochIKM reconstructs IPD that satisfy the built-in high-fidelity constraints (``admissible'' IPD) from a published paper that contains a K--M curve and an accompanying risk table. First, a vision-capable LLM (in this paper, GPT-5) automatically extracts, or ``digitizes'', the information of interest. For simplicity, we present the notation for one treatment arm. The digitized elements fall into two categories. The first are numbers that can be checked directly against the original paper, denoted with an asterisk: i) from the risk table, the numbers of
patients at risk $r^*_0=n^*, r^*_1,\ldots,r^*_K$, where $n^*$ is the arm-specific sample size; ii) the median overall survival (mOS) $M^{*}$ and its interval $(M^{*}_L,M^{*}_U)$; and iii) the number of events $D^{*}$. These elements contain no digitization error. The second are elements that can be obtained only through the vision capability of the LLM, denoted with a hat: i) the coordinate system of the plotting area, that is, the pixel positions of the two axes and of their labelled ticks, by which a pixel is converted into a time and a survival probability; ii)
the curve of the arm, traced by its colour and read at times $0=\hat\tau_0<\hat\tau_1<\cdots<\hat\tau_J$ with heights $\hat s_j=\hat S(\hat\tau_j)$, where consecutive read times define the fine
intervals $F_j=(\hat\tau_{j-1},\hat\tau_j]$; and iii) the visible drops of that curve, each a vertical segment whose time and whose heights immediately above and
below it reveal that at least one event occurred there (Figure~\ref{fig:algorithm}B). These elements may contain digitization errors, and the resolution of the image bounds how large they can be: a drop can be located in time only to about one pixel column, $\delta_t$, and a height can be read only to about half the width of the drawn
line, $\delta_s$. Censoring marks may be printed but not used in this paper; we return to them in the Discussion.

\begin{figure*}[!tp]
\centering
\includegraphics[width=0.78\textwidth]{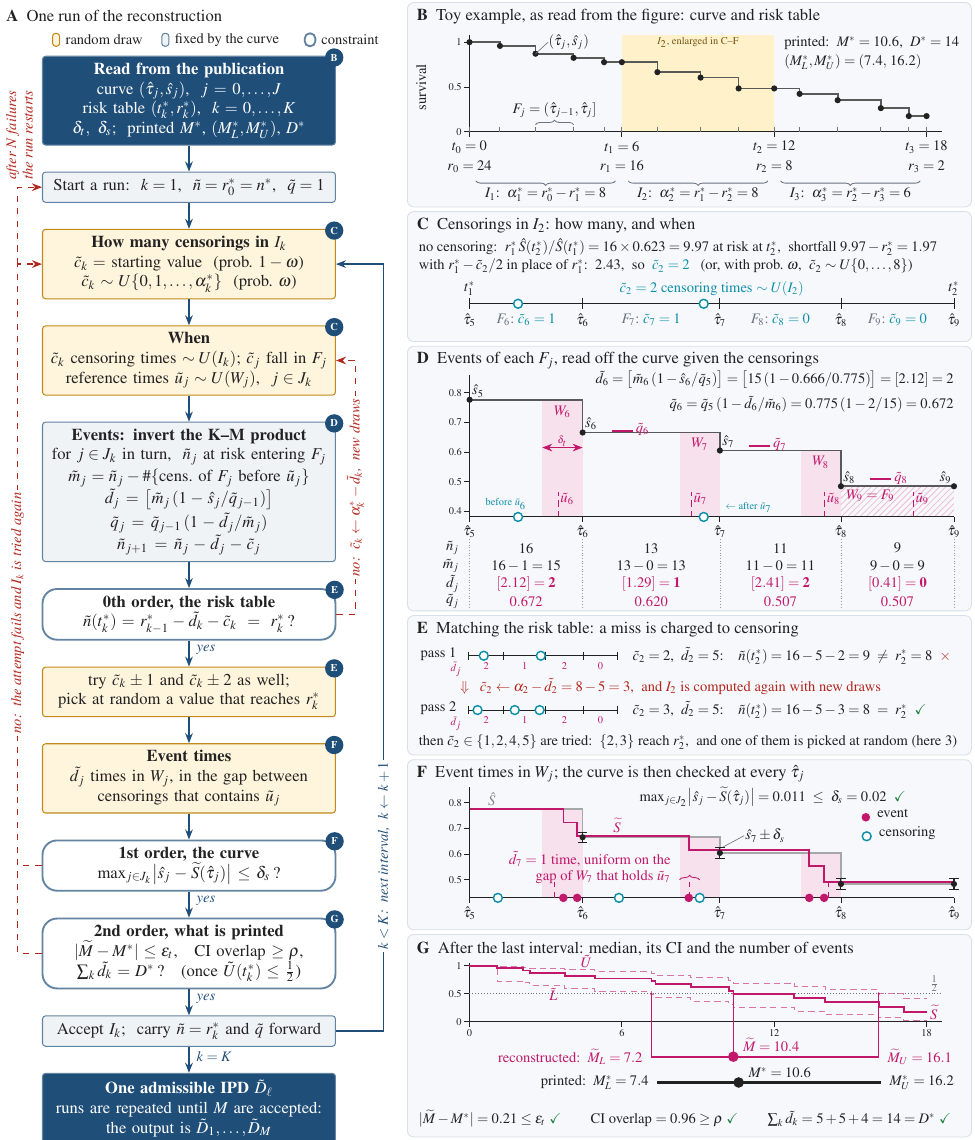}
\caption{The StochIKM algorithm. \textbf{A} Flow chart of one run. \textbf{B}--\textbf{G} A toy example carried through the steps of \textbf{A}: one arm of
24 patients with a risk table at 0, 6, 12 and 18 months, a printed median with its
confidence interval, and 14 printed events; \textbf{C}--\textbf{F} follow the second risk-table
interval, $I_2$.\label{fig:algorithm}}
\end{figure*}

\subsection*{Step 2: Reconstruction of admissible IPD}
A reconstruction is of high fidelity if it closely agrees with
the published K--M curve and the information commonly reported with it, and
an IPD that meets this standard is called admissible. Fidelity is imposed through three sequential constraints. A tilde denotes a quantity of the reconstruction: the survival $\tilde S$, the mOS and its interval ($\tilde M$, $\tilde M_L$, $\tilde M_U$), and the number of events $\tilde d_k$ in each risk-table interval. The zeroth-order, or risk-table, constraint requires that at every
risk-table time the reconstruction leaves exactly $r^*_k$ patients at risk (Figure~\ref{fig:algorithm}E),
\begin{equation}
  \tilde n(t^*_k) = r^*_k, \qquad k = 1,\ldots,K .
\end{equation}
The first-order, or curve, constraint requires that at every read time the K--M
curve of the reconstruction agrees with the digitized curve to within
$\delta_s$ (Figure~\ref{fig:algorithm}F),
\begin{equation}
  \max_{j=1,\ldots,J}\bigl|\hat s_j - \tilde S(\hat\tau_j)\bigr| \le \delta_s ,
\end{equation}
and that every event is placed within $\delta_t$ of the drop that revealed it.
The second-order, or reported-summaries, constraint uses the quantities the
publication prints alongside the figure (Figure~\ref{fig:algorithm}G),
\begin{equation}\label{eq:overlap}
\begin{gathered}
  \bigl|\tilde M - M^{*}\bigr| \le \varepsilon_t , \\
  \frac{\bigl|(\tilde M_L,\tilde M_U)\cap(M^{*}_L,M^{*}_U)\bigr|}
       {\bigl|(\tilde M_L,\tilde M_U)\cup(M^{*}_L,M^{*}_U)\bigr|} \ge \rho , \\
  \sum_{k=1}^{K}\tilde d_k = D^{*} ,
\end{gathered}
\end{equation}
where $(\tilde M_L,\tilde M_U)$ is the Brookmeyer--Crowley interval
\citep{brookmeyer1982}, and $\varepsilon_t$ and $\rho$ are fixed in advance.
Each part of (3) is used only when the publication prints the corresponding quantity, and an endpoint that is not reached is set to $t^*_K$; a figure with no
printed median and no printed number of events is reconstructed under (1) and (2)
alone. The algorithm below generates one candidate dataset at a time and keeps it
only if (1)--(3) hold, repeating until $\Omega$ admissible datasets have been collected.

The reconstruction proceeds one risk-table interval $I_k=(t^*_{k-1},t^*_k]$ at a time, from $I_1$ to
$I_K$, carrying forward the number of patients still at risk and the
reconstructed survival (Figure~\ref{fig:algorithm}A). Within $I_k$ it
must determine the number of events $\tilde d_k$ and the number of censored patients
$\tilde c_k$, whose sum must be the attrition $\alpha^*_k = r^*_{k-1}-r^*_{k}$ obtained in Step~1.

\begin{description}\setlength{\itemsep}{1pt}

\item[Initialization.] Set $\tilde q_0 = 1$ and $\tilde n_1 = n^{*}$, and recall
that each $I_k$ is the union of the fine intervals $F_j$, $j \in J_k$
(Figure~\ref{fig:algorithm}B).

\item[Step 2.1: the censoring.] With no censoring,
$r^{*}_{k-1}\hat S(t^{*}_k)/\hat S(t^{*}_{k-1})$ patients would be at risk at
$t^{*}_k$, and the shortfall against $r^{*}_k$ gives a starting value for
$\tilde c_k$, corrected for the fact that the events see about
$r^{*}_{k-1}-\tilde c_k/2$ patients rather than $r^{*}_{k-1}$. With probability
$\omega$ this value is replaced by a uniform draw from
$\{0,1,\ldots,\alpha^*_k\}$; this randomization is what lets the search reach every possible configuration, as Proposition~\ref{prop:exists} requires. The censoring times are drawn
independently and uniformly on $I_k$, and $\tilde c_j$ of them fall in $F_j$
(Figure~\ref{fig:algorithm}C).

\item[Step 2.2: the events.] Events of $F_j$ are placed in the event window $W_j$, equal to
the last $\delta_t$ of $F_j$ when the curve drops at $\hat\tau_j$ and to all of
$F_j$ otherwise,
and a reference time $\tilde u_j$, uniform on $W_j$, fixes which censorings of
$F_j$ come before them (Figure~\ref{fig:algorithm}D). Inverting the K--M product
one fine interval at a time gives
$\tilde m_j=\tilde n_j-\#\{\text{censorings of } F_j \text{ before } \tilde u_j\}$ and
\[
\begin{gathered}
  \tilde d_j=\Bigl[\tilde m_j\Bigl(1-\frac{\hat s_j}{\tilde q_{j-1}}\Bigr)\Bigr],
  \\
  \tilde q_j=\tilde q_{j-1}\Bigl(1-\frac{\tilde d_j}{\tilde m_j}\Bigr),
\end{gathered}
\]
with $\tilde n_{j+1}=\tilde n_j-\tilde d_j-\tilde c_j$. Given the censorings,
this step is deterministic.

\item[Step 2.3: matching the risk table.] If
$\tilde n(t^{*}_k)=r^{*}_{k-1}-\tilde d_k-\tilde c_k$ is not $r^{*}_k$, the
discrepancy is charged to censoring, $\tilde c_k \leftarrow \alpha^*_k-\tilde d_k$,
and $I_k$ is recomputed (Figure~\ref{fig:algorithm}E). Once $r^{*}_k$ is
attained, $\tilde c_k \pm 1$ and $\tilde c_k \pm 2$ are tried as well, and one of
those that also reach $r^{*}_k$ is chosen at random.

\item[Step 2.4: fidelity check.] Constraint (1) now holds by construction;
constraint (2) is checked over $j \in J_k$ (Figure~\ref{fig:algorithm}F); and
constraint (3) is checked as soon as it is decidable, that is, once the upper
confidence limit of the reconstructed curve has fallen to one half
(Figure~\ref{fig:algorithm}G). A failed interval is attempted again from Step~2.1,
and after $N$ failures the run is discarded and restarted from $I_1$ (dashed
paths of Figure~\ref{fig:algorithm}A).

\end{description}

\noindent Steps 2.1--2.4 are repeated for $k=1,\ldots,K$; a run that reaches the end of $I_K$
yields one admissible IPD $\tilde D_\ell=\{(\tilde y_i^{(\ell)},\tilde \Delta_i^{(\ell)})\}_{i=1}^{n^*}$, and runs continue until $\tilde D_1,\ldots,\tilde D_\Omega$ have been
collected.

\begin{proposition}[Under perfect digitization, StochIKM guarantees a solution.]\label{prop:exists}
Suppose the risk table and the printed summaries are transcribed correctly, and every
survival probability $\hat s_j$ read from the figure is within $\delta_s$ of the curve
printed in the publication (perfect digitization). Then the original patient data of the trial satisfy constraints (1)--(3); in particular, the admissible set is not empty.
\end{proposition}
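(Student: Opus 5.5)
The plan is to take the true IPD $D^{\circ}=\{(y_i,\Delta_i)\}_{i=1}^{n^*}$ of the trial as the candidate and check constraints (1)--(3) one at a time. Each check uses only the assumption attached to that constraint. The published figure, risk table and printed summaries are all functions of $D^{\circ}$, under the idealized conditions of internally consistent reporting and an error-free figure stated in the Introduction. Write $S^{\circ}$ for the Kaplan--Meier estimator of $D^{\circ}$, which is the curve printed in the publication. Write $n^{\circ}(t)$ for its number at risk, and $M^{\circ}$ and $(M^{\circ}_L,M^{\circ}_U)$ for its median and Brookmeyer--Crowley interval.

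\textbf{Constraint (1).} Consistent reporting means the printed risk table was computed from $D^{\circ}$. Correct transcription then gives $r^*_k=n^{\circ}(t^*_k)$ for every $k$, so (1) holds with equality. For the same reason the attritions satisfy $\alpha^*_k=d^{\circ}_k+c^{\circ}_k$.

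\textbf{Constraint (2).} The curve of $D^{\circ}$ is the printed curve, so $\tilde S=S^{\circ}$. Perfect digitization gives $|\hat s_j-S^{\circ}(\hat\tau_j)|\le\delta_s$ for every $j$, which is the height part of (2). For the timing part, each event time $y_i$ with $\Delta_i=1$ is a jump of $S^{\circ}$ and so appears as a vertical segment of the printed curve. Under exact extraction of the plotted drops, that segment is read at a pixel column within $\delta_t$ of $y_i$. One subtlety needs handling here. Several events can fall within one pixel column and are then read as a single drop. This is harmless, because the constraint only requires each event to lie within $\delta_t$ of \emph{the} drop that revealed it, and several events may share a drop. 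A second subtlety is a tiny drop hidden by the line width. Such an event then lies in a fine interval without a visible drop, so its window $W_j$ is all of $F_j$ and no timing requirement is violated. I would make both points explicit.

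\textbf{Constraint (3).} This is the step I expect to be the main obstacle, because the summaries are compared with tolerances rather than exactly. Consistent reporting means $D^*=\sum_k d^{\circ}_k$ when the count refers to the plotted follow-up, so the event equation holds exactly. I would need to state that the printed count covers events up to $t^*_K$; any later events must be excluded or the definition of $\tilde d_k$ adjusted. The printed median and interval were computed from $D^{\circ}$ by the Brookmeyer--Crowley method, and the reconstruction uses the same method. Hence $\tilde M=M^{\circ}=M^*$ and the Jaccard overlap equals $1\ge\rho$, for any $\varepsilon_t\ge 0$ and $\rho\le 1$. Here I would also have to check two conventions. First, an unreached endpoint is set to $t^*_K$ on both sides, so the comparison is still between identical quantities. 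Second, if the publication used a different interval transform (log, log--log), the claim needs that the reconstruction adopt the same one. Otherwise only closeness, not equality, follows, and a condition on $\rho$ would be required. Rounding of printed values to the reported precision must be absorbed by $\varepsilon_t$; I would state this as part of ``transcribed correctly''.

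Combining the three parts, $D^{\circ}$ satisfies (1)--(3), so the admissible set is nonempty. I would close with a remark linking this to the algorithm. With probability $\omega>0$, Step~2.1 draws any $\tilde c_k\in\{0,\ldots,\alpha^*_k\}$, including $c^{\circ}_k$, and the uniform placement of censoring times has positive density near the true configuration. So a configuration reproducing $D^{\circ}$ up to within-window timing has positive probability. This is a supporting remark, not part of the proposition, and would need an argument that the rounding in Step~2.2 recovers $d^{\circ}_j$ from heights within $\delta_s$ of $S^{\circ}$.
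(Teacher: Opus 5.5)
Your proposal is correct and is essentially the paper's proof. Like the paper, you take the original data $x^0$ as the witness and verify (1) from the correctly transcribed risk table, (2) from the $\delta_s$ assumption on the heights together with each event lying at the drop that revealed it, and (3) from the printed summaries having been computed from $x^0$, with rounding absorbed by $\varepsilon_t$; the extra conventions you flag (events counted beyond $t^*_K$, the same interval method, shared or invisible drops) are assumptions the paper's short proof leaves implicit, not a different route.
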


\begin{proof}
Write $x^{0}$ for the original data. Since the risk table is read correctly,
$\tilde n(t^{*}_k) = r^{*}_k$ holds at $x^{0}$ for every $k$, which is (1), and
exactly $\alpha^{*}_k$ patients leave the risk set in each $I_k$. The K--M curve
of $x^{0}$ is the published curve $S$, so at every read time
$|\hat s_j - S(\hat\tau_j)| \le \delta_s$ by assumption, which is (2); each event
of $x^{0}$ produced the drop that revealed it, so it lies in that drop's window.
The printed summaries $M^{*}$, $(M^{*}_L,M^{*}_U)$ and $D^{*}$ are computed from
$x^{0}$, so (3) holds at $x^{0}$ up to the rounding of the printed digits, which
$\varepsilon_t$ absorbs.
\end{proof}

\subsection*{Step 3: Relaxation}
Every quantity denoted with a hat is read from an image, which may be of poor quality or even contain mistakes, and the LLM may misread a legend or a colour. The premise of Proposition~\ref{prop:exists}, the perfect digitization assumption, fails and the admissible set may be empty. Repeated failure therefore has two possible causes: the search has not yet found a solution, or the digitization is
inconsistent with the published summaries and no solution exists. Step~3, relaxing constraints (1)--(3) lower the thresholds to a certain extent, and still produce result of fidelity given imperfect digitization. If $R$ runs produce no admissible dataset, the constraint
responsible for most of the failures is identified and relaxed by one level, and
the search restarts; the levels are fixed in advance and applied in the same
order for every figure. The second-order constraint has three parts and they are relaxed separately. The number of events  is the commonest cause of failure, a dataset is
accepted if $\bigl|\sum_k \tilde d_k - D^{*}\bigr| \le \tau D^{*}$, with $\tau = 0.035$, and if
that is still not enough, $\delta_s$ is widened as for the first-order constraint while the
tolerance is kept. The number of events is a sum over
every interval, and configurations that meet every other constraint often miss it by a few
events, so a small tolerance raises the efficiency of the search considerably while the risk
table, the curve, the median and its interval keep their full fidelity. A discrepancy in number of events per arm may be attributed to a systematic mis-reading besides the digitization: the reported total number of events may count events that occurred after the last time point the figure shows, a practice that varies with the authors and the journals. The tolerance $\tau$ is
the largest discrepancy observed in the case study, 9 of 262 events. For the median and its
interval, $\varepsilon_t$ is doubled and $\rho$ lowered to $0.6$, then $\varepsilon_t$
doubled again and $\rho$ lowered to $0.4$, then the interval condition is dropped, then the
median condition is dropped. For the first-order constraint, $\delta_s$ is multiplied by
$1.5$ at each level, up to a ceiling $\delta_s^{\max}$. The zeroth-order
constraint is never relaxed. If no admissible dataset is obtained once every level has been exhausted, the figure is declared irreconcilable and no IPD is returned. A reconstruction obtained after relaxation is reported together with the
levels used.

Throughout, the tolerances $\delta_t$ and $\delta_s$ are fixed by the resolution
of the figure, one pixel column and half the width of the drawn line
respectively, and $\varepsilon_t$ is $\delta_t$ plus half the last printed digit
of $M^{*}$. The remaining constants are used for every
figure in this paper: the probability of the uniform draw in Step~2.1 is
$\omega=0.1$; an interval is attempted $N=200$ times before the run is discarded;
the required overlap of the two intervals is $\rho=0.8$; the portfolio size is
$\Omega=50$; relaxation begins after $R=40\Omega$ runs without an admissible dataset and
stops at the floors $\rho_{\min}=0.4$ and $\delta_s^{\max}=0.10$.

\section{Case Study}\label{sec:case}
We applied StochIKM to 108 arms in 53 overall-survival figures of 28 trials, 18 in
advanced gastric or gastro-oesophageal junction cancer and ten in advanced breast cancer,
published between 2017 and 2025 in eleven journals; most are phase III trials. A figure shows the whole trial
population or a biomarker subgroup, and an arm has between 14 and 792 patients. All trials
are listed in the References with their ClinicalTrials.gov identifiers. Details description can be found in \textbf{Supplementary Materials}. StochIKM is compared with LLM-improved IPDfromKM, that is, IPDfromKM \citep{liu2021ipdfromkm} given the same traced curve and the same risk table as input, which removes the manual curve-tracing error of the original software.

\begin{figure*}[!tp]
\centering
\includegraphics[width=0.78\textwidth]{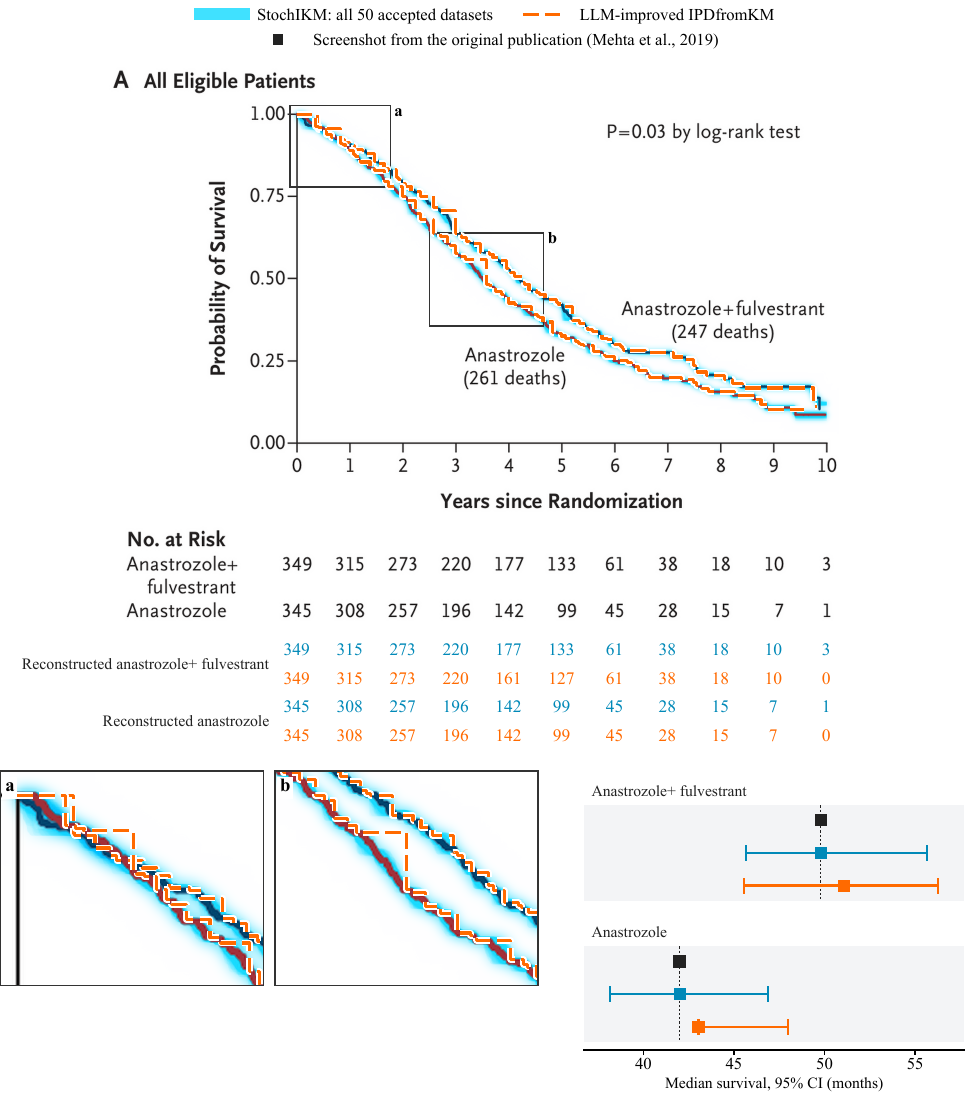}
\caption{Reconstructions drawn over a published overall-survival figure of
NCT00075764 \citep{NCT00075764}: the K--M curves of the 50 admissible IPD of StochIKM (light blue, forming a band) and of the single IPD of LLM-improved IPDfromKM (orange, dashed), their numbers at risk under the published risk table, two enlarged regions (a, b), and the median survival with its 95\% confidence interval (black, published). Time is in years; medians are in months.\label{fig:comparison}}
\end{figure*}

Figure~\ref{fig:comparison} shows one of the figures, taken from
NCT00075764 \citep{NCT00075764}. In the enlargements, the 50 admissible curves of
StochIKM form a narrow band that follows the published curve step by step,
whereas the single curve of LLM-improved IPDfromKM moves in fewer and larger
steps. StochIKM reproduces the risk table of both arms in all 50 IPD, and in the
anastrozole plus fulvestrant arm every IPD also carries the 247 deaths printed on the
figure; LLM-improved IPDfromKM reproduces neither risk table. The published medians are 49.8 and 42.0 months.
StochIKM gives 49.8 and 42.0, and LLM-improved IPDfromKM 51.1 and 43.0; across
its 50 IPD the median of StochIKM varies by 0.39 and 0.40 months, which is the
uncertainty the figure itself leaves. This publication prints no confidence
interval for either arm, so the intervals in the lower right panel are those of
the reconstructions alone. LLM-improved IPDfromKM places all events of a fine interval at the drop that revealed them, so its
reconstruction has many ties: 65 distinct event times in the anastrozole
arm, one of them carrying 24 events. A step that large can take the curve across one half in one jump; the Brookmeyer--Crowley lower limit then reaches one half at the same
event time, and the interval collapses onto
the median, drawn with no
left whisker. StochIKM places each event inside its own window $W_j$, which
yields 264 distinct event times in the same arm and an interval with
both ends. Other cases can be found in \textbf{Supplementary materials}.

The curves of 104 of the 108 arms could be traced. The four that could not are the arms of
one figure (NCT02545504), in which two pairs of arms share a colour and differ only in line
style, so that no tracing by colour can tell them apart (discussed in detail in \textbf{Supplementary materials}). StochIKM returned at least one
admissible IPD for every one of the 104 traced arms, and the full admissible set of $\Omega=50$ for 97 of
them; in the other seven the run limit was reached with between 1 and 49 admissible IPDs because in those cases, constraints (1)--(3) are tighter. Among those
104 arms, 97 print a median survival and LLM-improved IPDfromKM reached a median in all of
them, so the two methods are compared over these 97 arms, of which 71 also print both
bounds of the confidence interval (Table~\ref{tab:comparison}). StochIKM reproduces the risk
table in every arm and LLM-improved IPDfromKM in only 44. The median error in the median
survival is 0.00 (not perfect, 0.00 as a result of rounding to 2 demcimals) month against 0.10, and the largest error 0.91 against 1.32 months. Over the
71 arms that report an interval, the reconstructed intervals agree with the published ones
equally well (median overlap 0.94 for both).

\begin{table}[!t]%
\centering
\caption{Reconstruction of 97 published arms by StochIKM and LLM-improved IPDfromKM.\label{tab:comparison}}%
\begin{tabular*}{\columnwidth}{@{\extracolsep\fill}l D{.}{.}{1.2} D{.}{.}{1.2}@{}}
\toprule
 & \multicolumn{1}{c}{\textbf{StochIKM}$^{a}$}
 & \multicolumn{1}{c}{\begin{tabular}[b]{@{}c@{}}\textbf{LLM-improved}\\ \textbf{IPDfromKM}$^{b}$\end{tabular}}\\
\midrule
Risk table reproduced exactly
   & \multicolumn{1}{c}{97/97} & \multicolumn{1}{c}{44/97}\\
\multicolumn{3}{@{}l}{Absolute error in the median survival (months)}\\
\hspace{1em}Median            & 0.00 & 0.10\\
\hspace{1em}Mean              & 0.05 & 0.21\\
\hspace{1em}90th percentile   & 0.16 & 0.56\\
\hspace{1em}Maximum           & 0.91 & 1.32\\
\hspace{1em}Within 0.5 month
   & \multicolumn{1}{c}{96/97} & \multicolumn{1}{c}{84/97}\\
\hspace{1em}Within 1 month
   & \multicolumn{1}{c}{97/97} & \multicolumn{1}{c}{93/97}\\
\multicolumn{3}{@{}l}{Agreement of the confidence intervals, 71 arms$^{c}$}\\
\hspace{1em}Median overlap          & 0.94 & 0.94\\
\bottomrule
\end{tabular*}
\begin{tablenotes}
\item[$^{a}$] The accepted dataset whose median is closest to the published
value (50 datasets per arm; fewer in 7 arms); 55 arms met the full criterion
and 42 needed relaxation.
\item[$^{b}$] Given the same LLM-traced curve and risk table as StochIKM.
\item[$^{c}$] Intersection over union of the reconstructed and
published intervals.
\end{tablenotes}
\end{table}

Of the 101 arms whose curve could be traced, StochIKM returned 50 admissible IPD for 86 arms; the remaining 15 arms returned an empty admissible set, because of the quality of the published figure, the tracing of the arm by the LLM, or both (Proposition~\ref{prop:exists}). 81 of the 86 arms print a median survival, and in two of these LLM-improved IPDfromKM reached no median, so the two methods are compared over the remaining 79 arms, of which 61 also print both bounds of the confidence interval (Table~\ref{tab:comparison}). Over these 79 arms StochIKM reproduces the risk table in every arm and LLM-improved IPDfromKM in only 33. The median error in the median survival is 0.01 month against 0.13, and the largest error 0.91 against 5.26 months. Over the 61 arms that report an interval, the reconstructed intervals agree with the published ones about equally well (median overlap 0.93 and 0.92).

\section{Simulation}\label{sec:simulation}
The simulation asks one question: when the figure is read without error, does a method
give back the data behind it? Each replicate is one arm. Event times are Weibull with
shape $1.2$ and a median of $12$ months; patients drop out at an exponential rate chosen
to reach a target censoring fraction, and everyone still in the study is censored at $36$
months. From these data we take what a publication prints: the K--M curve, read
exactly at every step; the risk table at $K+1$ equally spaced times; the median and its
$95\%$ confidence interval, rounded to one decimal; and the number of events. StochIKM
and IPDfromKM \citep{liu2021ipdfromkm} are given the same input. The baseline has $350$ patients,
$30\%$ censoring and $K=8$, and each factor is varied in turn: $50$ or $1000$ patients,
$10\%$ or $50\%$ censoring, $K=4$ or $12$. Every scenario has $1000$ replicates.
Over time, we take the difference between the
reconstructed K--M curve and that of the data, and report its mean over the
replicates (bias) and its standard deviation (empirical SE). As single numbers, we report
how often the risk table and the number of events are reproduced exactly, the absolute
error in the median, and the overlap of the reconstructed and the original confidence
interval of the median, defined as in Equation~\eqref{eq:overlap}.

\begin{figure*}[!t]
\centering
\includegraphics[width=0.92\textwidth]{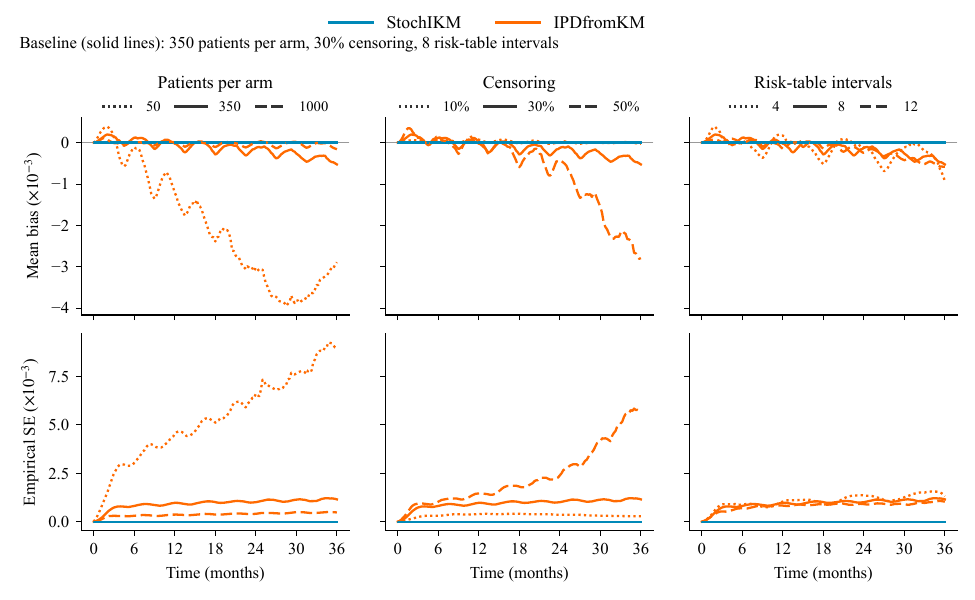}
\caption{Simulation with perfect digitization. Top row: mean over 1000
replicates of the difference between the reconstructed K--M curve and that of the simulated data (bias). Bottom row: its standard
deviation (empirical SE). StochIKM (blue) is at zero throughout and
its three lines coincide.\label{fig:simulation}}
\end{figure*}

StochIKM returned the data exactly in every replicate of every scenario: its bias and its
empirical SE are zero at all times, and it reproduces the
risk table, the number of events, the median and its confidence interval without error
(Figure~\ref{fig:simulation}). IPDfromKM is close but not exact. Its curve lies below that
of the data, and the bias and the empirical SE grow with time; both are largest with few
patients ($-0.004$ and $0.009$ at $n=50$) and with heavy censoring ($-0.003$ and $0.006$
at $50\%$), and the number of risk-table intervals changes them little. It reproduces the
risk table in every replicate, but the number of events, although it is given, in only
$4\%$ to $97\%$ of them, least often with many patients, heavy censoring or few risk-table
intervals. Its error in the median is $0.01$ to $0.10$ month and its confidence intervals
overlap the original ones by $0.98$ to $0.99$.

\section{Discussion}\label{sec:discussion}
Reconstruction methods routinely claim high fidelity, but each has measured it
differently, and no criterion has been agreed
upon \citep{guyot2012enhanced,liu2021ipdfromkm,zhao2025kmgpt}. Our principal
contribution is to propose one: a reconstruction is admissible when it reproduces
the risk table exactly, follows the digitized curve to within the resolution of
the image, and respects the summaries printed beside the figure. Because that criterion does not identify a single
dataset, StochIKM returns a portfolio of $\Omega$ admissible ones, and the spread
across them measures what the publication left undetermined.

We build directly on \citet{liu2021ipdfromkm}, whose algorithm depends on manual,
software-assisted tracing and returns a single IPD, and whose authors
anticipated both the automation of the reading and the multiplicity of the
answer. Vision-capable language models have made the first possible, and the
constrained sampling scheme developed here makes the second explicit. Over 97
published arms the reconstruction reproduced every risk table and matched the
published median to within one month in all of them, and
Proposition~\ref{prop:exists} shows that, when the figure is traced exactly, an
admissible set exists. A failure of producing result therefore indicates either an imperfect figure-tracing or imperfect quality of the source figure.

Limitations lie in the figure-reading part: we are not able to use censoring marks. Many figures draw them, and they carry exactly the information determining IPDs. The K--M curve and the risk table determine how many patients
are censored between two events but not when. Authors and journals draw these marks in
different ways, and adjacent marks merge into one, so the vision-language models available to
us cannot yet read them reliably. If a highly capable large language model could read the marks, then it would boost the algorithm on efficiency: with the censoring times given, the events in each interval would follow
deterministically from the steps of the curve, so the search for $\tilde c_k$ would no longer be needed. Under perfect digitization
it would recover the original true patient data themselves, up to tied events
(\textbf{Supplementary material)}; a partial reading, which separates some marks and only
counts the others, would already narrow the admissible set. More broadly, what limits
StochIKM is the vision of the model that reads the figure. Given how quickly vision-language models have improved, we expect this
limitation to ease, and the algorithm is ready to take advantage of better readings as they
become available.

The algorithm is nonetheless ready for use at scale. A figure whose reading cannot be reconciled with its own
printed summaries becomes evidence about the publication, so reconstruction at
scale can flag figures that no patient data could have produced, the
concern behind systematic screening of trials for fabricated data
\citep{carlisle2017}. Reconstructed records restore the time-to-event analyses
that aggregate reporting forecloses, including restricted mean survival and the
emulation of comparisons never made directly
\citep{royston2013rmst,hernan2016target}. And a portfolio of admissible datasets
is a natural input to Bayesian borrowing from historical controls
\citep{ibrahim2000power,viele2014historical}, where the question becomes how much of a conclusion survives
across every dataset the figure permits.

\section{Code Availability}
The source code is available from emailing robert.ke@yale.edu upon reasonable request. A web-based user interface is under development and will be made publicly available. Researchers interested in early access to a pre-release version of the interface are welcome to contact the corresponding author.

\nocite{NCT00075764,%
        NCT01234337,%
        NCT01630083,%
        NCT01774786,%
        NCT01928394,%
        NCT02107703,%
        NCT02494583,%
        NCT02545504,%
        NCT02746796,%
        NCT02767661,%
        NCT02872116,%
        NCT02954536,%
        NCT03125902,%
        NCT03504397,%
        NCT03523585,%
        NCT03529110,%
        NCT03615326,%
        NCT03653507,%
        NCT03675737,%
        NCT03694522,%
        NCT03745170,%
        NCT03777657,%
        NCT03802591,%
        NCT03901339,%
        NCT04191499,%
        NCT04639986,%
        NCT04662710,%
        NCT05008783}

\bibliography{refs}

\end{document}


\maketitle
\section{Prompt engineering in the digitization process}\label{sec:S1}
\paragraph{Division of labour.} A published panel carries two kinds of
information, and they are read by different instruments. Text---axis labels and
ticks, arm names and colours, the numbers-at-risk table, and any median printed
in the figure---is transcribed by a vision-capable language model, here GPT-5, which copies
symbols accurately but localizes a drawn line poorly. The curve itself is read by
pixel tracing, which localizes to within a pixel but cannot name what it sees.
The language model therefore supplies the coordinate system and the metadata that
the tracer needs, together with a coarse reading of each curve that tells the tracer
which line to follow (Section~\ref{sec:S2}), and the tracer supplies the survival values.

\paragraph{Stage 1: metadata under a schema.} The first call returns a single
JSON object constrained by a fixed schema: axis labels, units and the largest
labelled tick; the pixel coordinates of the plotting frame; one entry per arm
giving its printed name, colour name and hexadecimal colour, line style and the
time at which its curve ends; the time headers of the risk table and each arm's
row, truncated at the first blank cell rather than padded; every visible vertical
step of each curve, with the survival values immediately above and below it; and
the printed median with its interval, level and unit when the figure shows one.
Two instructions do the work of many: transcribe what is printed and do not guess
values that are not visible, and end a risk-table row at the first missing cell
instead of inventing entries. Schema constraints make the reply parseable and
make omissions explicit.

\paragraph{Stage 2: curve values in free form.} Reading a value off a plot is a
measurement, not a transcription, and constraining it to a schema degrades it: in
our earlier system, schema-constrained readings were $9$--$20$ percentage points
from the truth, against $1.2$--$3.5$ for the same model asked to reason in prose
before answering. Stage 2 therefore asks the model to work through one arm and at
most twenty times per call, naming the labelled gridline immediately above the
curve and the one immediately below it before giving each value, and to close
with a single line in a fixed format from which the numbers are parsed. Four
rules are stated in the prompt: follow the drawn line and read its centre where
censoring marks thicken it; never report a value larger than the previous one,
since a Kaplan--Meier curve does not rise; read the level above a vertical step
just before it and the level below just after; and report what is drawn, without
adjusting anything to match a median quoted in the paper. The last rule protects
the second-order check, which is informative only if the curve is read
independently of the median it is later compared with.

\paragraph{Where the curve is read.} The language model reads each arm at every
risk-table time, at $24$ equally spaced times across the observed range, and at a
pair of times $\pm 0.004\,t_{\max}$ around each step reported in stage 1. These
readings are not used as the curve: they guide and check the pixel trace
(Section~\ref{sec:S2}), whose own readings, at the risk-table times and the same $24$ equally
spaced times, together with the steps it detects, are what the reconstruction
uses. Small figures are upscaled before they are sent, so that the plotting frame
is at least $600$ pixels high, since a plot region of a few hundred pixels cannot
support a reading finer than its own resolution.

\paragraph{The prompts}
\begin{quote}\small\ttfamily\raggedright
\singlespacing
\textbf{Stage 1, system message.} "You digitise Kaplan-Meier figures. Reply with
a single JSON object matching this schema exactly: [JSON schema; required fields
are axes, arms and risk\_time\_points]"

\medskip
\textbf{Stage 1, user message.} "You are digitising a Kaplan-Meier survival
figure from a clinical trial publication. Transcribe what is printed. Do not
guess values that are not visible. [Blocks 1--2 request the axis labels, units,
ticks and the pixel coordinates of the plotting box, and one entry per arm
giving its printed name, colour, line style and the time its curve ends.]
3. NUMBER AT RISK TABLE [\ldots] If a cell is blank or missing, END the list
there. Do not invent values and do not pad with zeros.
4. VISIBLE DROPS - for each arm, drops: every step of the curve that you can see
as its own vertical segment. For each give t (time of the vertical segment,
x-axis units), s\_top (survival just above the segment) and s\_bottom (just
below) [\ldots] Where the curve descends in steps too small to see one by one,
do not list them. Censoring tick marks are not drops.
5. PRINTED MEDIANS - only if the figure itself prints a median survival for an
arm (in the legend, an inset table or an annotation): printed\_median with
median, ci\_lower, ci\_upper (numbers, or "NR" if not reached), ci\_level and
unit (the time unit printed next to the median [\ldots] it may differ from the
x-axis unit). Otherwise omit it. [Block 6 asks for the endpoint name, a
self-confidence score and free-text notes.]"

\medskip
\textbf{Stage 2, system message.} "You read values off scientific plots
carefully and show your reasoning."

\medskip
\textbf{Stage 2, user message.} "[The axis descriptions are filled in from stage
1.] Read ONLY the \{arm\}. Ignore every other curve, any confidence band, and
the number-at-risk table below the plot. Work through the times one at a time,
out loud. For each time: 1. Find that time on the x axis and move straight up to
where THIS curve is drawn. 2. Name the labelled gridline immediately ABOVE the
curve there and the one immediately BELOW it. 3. Say roughly where between those
two the curve sits, then give the value. Rules that matter: - Follow the drawn
line. Where censoring tick marks thicken it, read its centre. - A Kaplan-Meier
curve never rises. If a value you are about to write is higher than the previous
one, re-read it. - Where a time lies just before a vertical step, read the level
above the step; just after it, the level below. - Report on the 0-100 percent
scale even if the axis is drawn 0.0 to 1.0. - Report what is drawn. Do not
adjust anything to match a median quoted in the paper. Times to report:
\{times\} Finish with one line in exactly this format, nothing after it: FINAL:
\{t\}=<v> [\ldots] Each <v> is a number from 0 to 100 with at most one decimal
place."
\end{quote}

\section{Pixel tracing guided by the language-model readings}\label{sec:S2}
The tracer turns the screenshot into the curve values $\hat s_j$ at the read times
$\hat\tau_j$ and the drops of Section~2 of the main text, in four steps.

\paragraph{Step 1: calibration.} The plotting frame is located from the axis lines: the lowest dark
horizontal line spanning at least $45\%$ of the image width, and the leftmost dark
vertical line spanning at least $35\%$ of its height. Tick marks are searched for in a band
of $2$ to $12$ pixels on each side of each axis line, separately, so that a thick or
blurred axis line does not hide them; a tick is a run of at least three dark pixels,
narrower than $0.8\%$ of the axis or five pixels, whichever is larger. The map from a
time to a pixel column is the straight line through two detected ticks carrying two
of the labels read in stage 1, chosen among all such pairs to place the most labels on
detected ticks (to within three pixels), with the full labelled range spanning between
$0.7$ and $1.3$ times the axis length. When ticks are drawn at every minor step and
labelled only at some of them, several scales place every label on a tick; the one
whose first and last labels fall at the ends of the axis line is taken. The survival
axis is calibrated in the same way. A time axis that cannot be calibrated from its
ticks makes the figure unusable, since the ends of the drawn axis line do not in
general coincide with the first and last labels; a survival axis may fall back to the
line ends, which happened in three of the $53$ figures.

\paragraph{Step 2: colours.} The non-white pixels inside the frame are clustered
into at most ten candidate colours, and every pixel is assigned to its nearest
candidate. For each candidate and each arm, the pixels of that colour are split into
connected pieces, and the curve of the arm in that colour is assembled from them.
The coarse readings of stage 2 decide which pieces belong to it: a piece whose median
distance from the readings exceeds $15$ percentage points is not part of the curve,
however long, flat and well placed it is. This single rule removes the objects that
otherwise mimic a curve: the short line segments of a legend, the axis lines and
gridlines, and text or median annotations printed in the colour of an arm. The widest
piece that follows the readings is taken first, and the remaining pieces are added
from the widest down, each only if it neither makes the curve rise nor lies below
its later course. Each arm is then given one colour by minimizing, over all
one-to-one assignments, the sum of three costs: the distance from the colour named in
stage 1, eight units per percentage point of distance between the assembled curve and
the readings, and $400$ units times the fraction of the read time range the curve
leaves uncovered. The last term separates a curve from the lighter, anti-aliased fringe
of the same curve, whose pixels follow the readings closely but only in fragments.
When the language model returns fewer than four readings for an arm, its curve is
assembled from the widest piece that starts near time zero, without the readings;
this happened for four arms. When two arms are printed in the same colour and told
apart only by line style, no colour can separate them, and both are declared
untraceable.

\paragraph{Step 3: the curve.} In each pixel column the curve is the median row of
its pixels, converted to a survival probability through the calibration, smoothed
by a three-column median, and made non-increasing by isotonic regression. Columns
in which the curve is hidden behind another arm are interpolated from the curve's
own neighbours; a run of more than two hidden columns is marked, and no reading or
drop is taken from it. A curve hidden behind the other arm from time zero, which
has no pixel of its own near the origin, is anchored at survival one and marked
hidden up to its first own pixel; this happened for $22$ arms.

\paragraph{Step 4: drops and readings.} A drop is a run of columns over which the
fitted curve falls by at least $1.5$ pixels (and at least $0.003$ in survival) and
which is not hidden; its time is the column at which it ends, and its top and
bottom are the survival values on either side. The curve is read at every
risk-table time and at $24$ equally spaced times, skipping hidden columns. The
resolution of the trace sets the tolerances of Section~2 of the main text: $\delta_t$ is one pixel
column converted to time.

\paragraph{When a trace is used.} A trace is used only if all of the following hold.
The time axis was calibrated from ticks. The curve starts at survival at least $0.9$
and its raw value at the origin does not exceed $1.05$; for an anchored curve, its
first own pixels lie above $0.6$, which separates a curve hidden at the start from a
flat tail mistaken for one. It covers at least $40\%$ of the columns between its
first and last pixel and contains at least one drop. Its median distance from the
readings of stage 2 is at most ten percentage points. Over the $100$ arms traced
with readings this distance had a median of $1.8$ points, a 90th percentile of
$4.7$ and a maximum of $8.0$. Times at which the trace and the readings differ by
more than five percentage points are recorded in the output for inspection. A
figure that fails these checks is reported as unreadable rather than reconstructed
from a reading known to be unreliable.

\section{The case-study output}\label{sec:S3}
Every figure of the case study has its own folder in \texttt{output/}, named
\texttt{NCT\textit{xxxxxxxx}\_Fig\textit{n}\_Panel\textit{X}} after the trial
registration number, the figure and the panel; the screenshot and its optional
metadata file are in \texttt{inputs/} under the same name. A folder contains:
\begin{description}\setlength{\itemsep}{0pt}
\item[\texttt{extraction.json}] The two language-model stages: axes and their ticks,
the plotting frame, and for each arm its name, colour, line style, risk-table row,
the drops seen in stage 1, the stage-2 readings and any printed median.
\item[\texttt{trace.json}] The pixel trace: the calibration of both axes and how it
was obtained, the candidate colours, and for each arm its colour, its status
(used or not, with the reason), its distance from the readings, its coverage and
hidden columns, its readings and its drops.
\item[\texttt{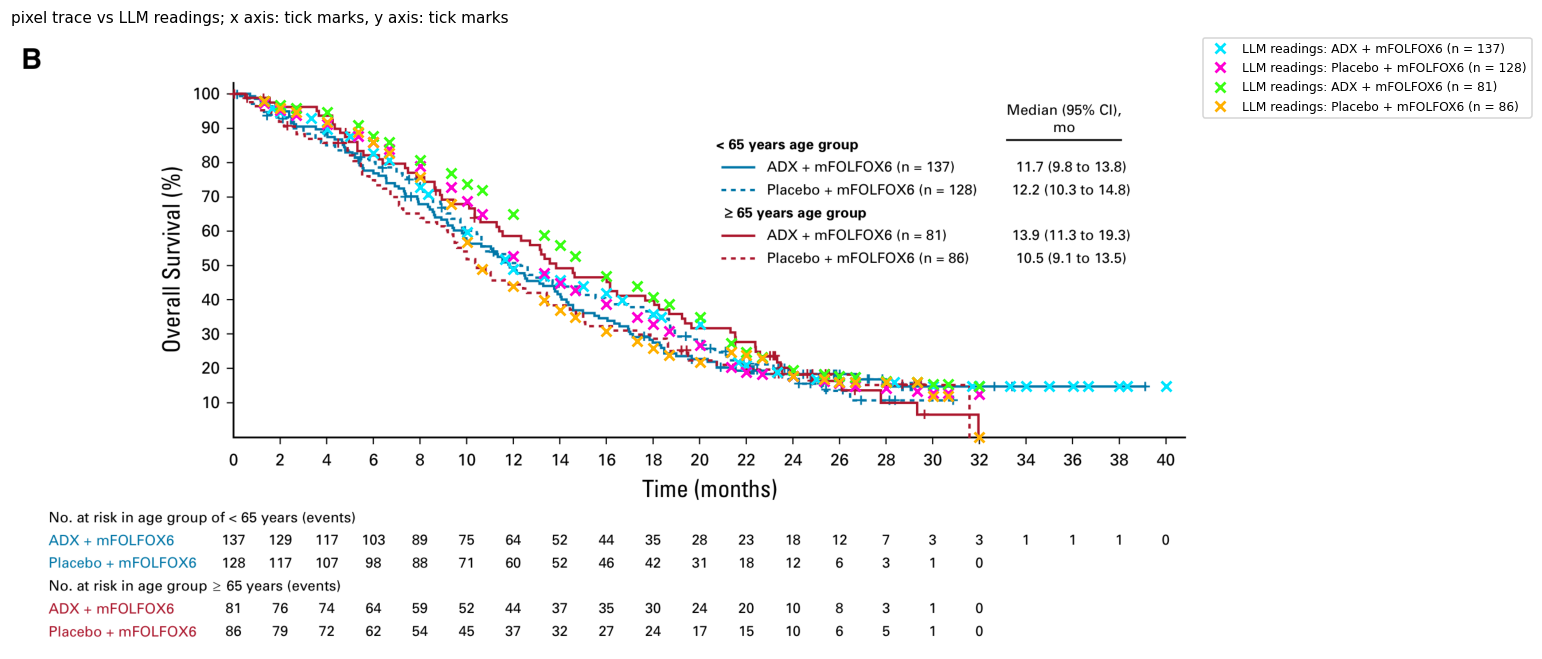}, \texttt{\textit{stem}\_trace.pdf}] The pixel trace drawn
over the screenshot, with the stage-2 readings as crosses: the first check a reader
should make.
\item[\texttt{arm\textit{i}\_\textit{name}\_ipd.csv}] The admissible IPD of arm
$i$, one row per patient, with columns \texttt{dataset} ($1$ to $50$),
\texttt{patient}, \texttt{time} (in the unit of the time axis) and \texttt{event}
($1$ death, $0$ censored).
\item[\texttt{summary.json}] Everything about the reconstruction: the parameters, the
risk table and any corrections made to it, the reported median, interval and number
of events, the number of runs and accepted datasets, the failures by cause, the
relaxation levels used, and the distribution of the reconstructed median and its
interval over the accepted datasets.
\item[\texttt{overlay.png}, \texttt{\textit{stem}\_overlap.pdf}] All accepted
reconstructions drawn over the screenshot, with the risk table and the summary.
\item[\texttt{compare/}] The comparison with LLM-improved IPDfromKM: its input
(\texttt{arm\textit{i}\_points.csv}, the traced points; \texttt{arm\textit{i}\_risk.csv},
the risk table), its output (\texttt{arm\textit{i}\_ipdfromkm.csv}), the R script
that produced it, the metrics of both methods side by side (\texttt{metrics.txt}),
and the two reconstructions drawn over the screenshot (\texttt{both.png},
\texttt{stochikm.png}, \texttt{ipdfromkm.png}).
\end{description}
The published, StochIKM and LLM-improved IPDfromKM median and interval of every
reconstructed arm, from which Table~1 of the main text is computed, are collected in
\texttt{output/comparison\_by\_arm.csv}. The spreadsheet
\texttt{StochKM\_relaxation\_summary.xlsx} lists every one of the $108$ arms with
its category (Section~\ref{sec:S4}), the failure under the default constraints, and the relaxation used.

\section{What happened to each of the 108 arms}\label{sec:S4}
Each arm falls into one of five categories (Table~\ref{tab:categories}).
\begin{description}\setlength{\itemsep}{0pt}
\item[A, not traced.] The curve could not be traced, and no reconstruction was
attempted.
\item[B, irreconcilable.] The curve was traced but no admissible IPD was found even
after every relaxation level had been exhausted.
\item[C, partial portfolio.] Fewer than $\Omega=50$ admissible IPD were found before
the run limit of $40\,\Omega$ runs. Relaxation begins only after $R$ runs without any
admissible dataset, so an arm that yields a few admissible IPD keeps its full
constraints and ends with a smaller portfolio.
\item[D, reconstructed after relaxation.] Fifty admissible IPD were found after one
or more relaxation levels.
\item[E, reconstructed under the full criterion.] Fifty admissible IPD were found
with no relaxation.
\end{description}

\begin{table}[!ht]
\centering
\begin{threeparttable}
\caption{The 108 arms of the case study by category.\label{tab:categories}}
\begin{tabular}{@{}llrr@{}}
\toprule
 & Category & Arms & Figures$^{a}$\\
\midrule
A & Not traced & 4 & 1\\
B & Irreconcilable & 0 & 0\\
C & Partial portfolio ($<50$ IPD) & 7 & 7\\
D & Reconstructed after relaxation & 44 & 29\\
E & Reconstructed under the full criterion & 53 & 35\\
\midrule
 & Total & 108 & 53\\
\bottomrule
\end{tabular}
\begin{tablenotes}
\item[$^{a}$] The arms of one figure can fall into different categories.
\end{tablenotes}
\end{threeparttable}
\end{table}

No arm is irreconcilable: every arm whose curve could be traced has at least one
admissible IPD, as Proposition~1 of the main text leads one to expect when the reading is correct.
Categories~A and C deserve a closer look.

\paragraph{Category A: one figure the tracer cannot read.} The four untraced arms
form one figure, the overall survival of NCT02545504 by age group (Figure~4B of
the publication), in which the two arms of each age group share a colour, blue below
$65$ years and red at $65$ and above, and differ only in being solid or dashed. Every
pixel of a colour then belongs to two curves, and any trace of that colour is a
mixture of them; the tracer recognizes the situation from the colours and line styles
reported in stage 1 and declares the four arms untraceable rather than reconstruct a
mixture. The stage-2 readings alone are no substitute: reconstructed from them, the two
experimental arms admitted datasets only after the curve tolerance and the
second-order constraint had been relaxed by several levels, and the two placebo arms
admitted none (Figure \ref{fig:S_untraceable}). Figures that separate their arms only by line style, or that are printed in
grey scale, lie outside what a colour-based tracer can read, and would need either a
tracer that follows a line through its dashes or a manual reading.

\begin{figure}[!ht]
    \centering
    \includegraphics[width=1\linewidth]{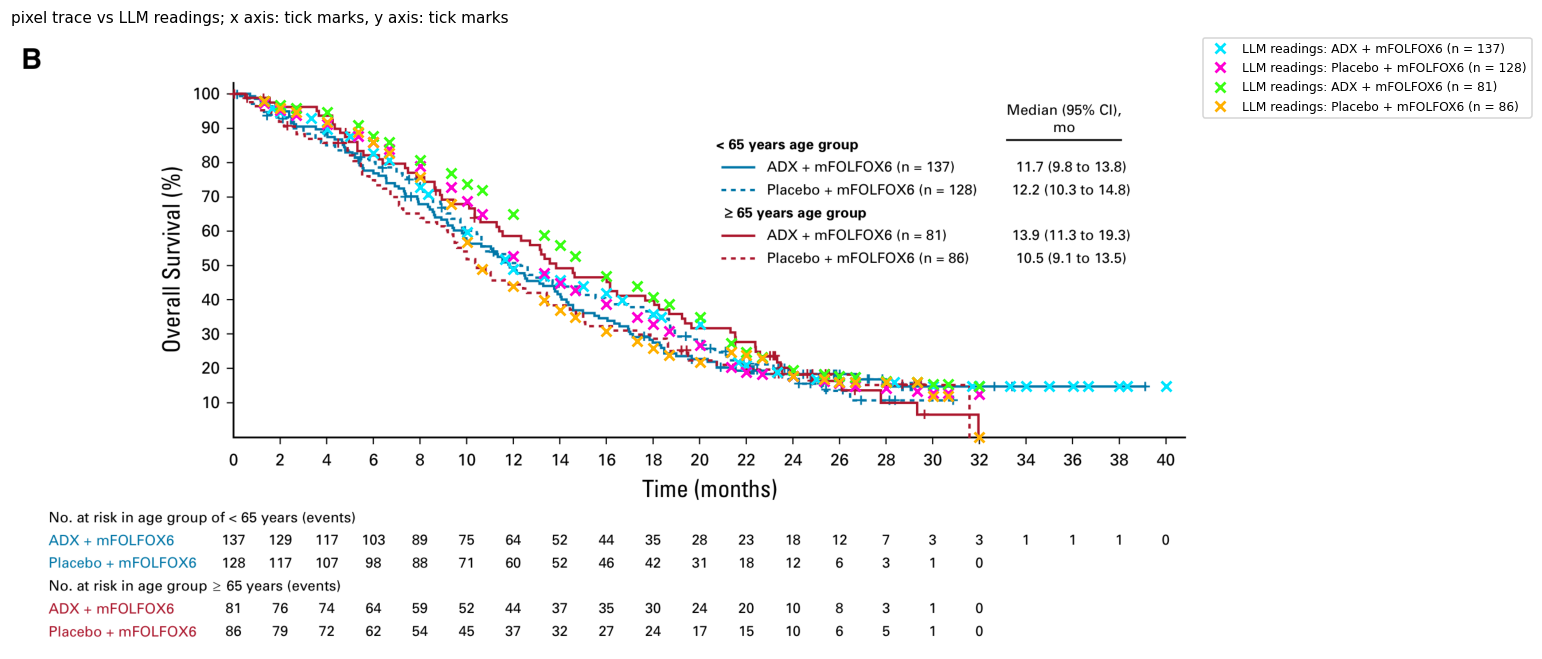}
    \caption{The one figure in the case study that the tracer cannot read: overall
    survival by age group in NCT02545504 (Figure~4B of the publication). The two arms
    of each age group share a colour, blue below 65 years and red at 65 and above, and
    differ only in being solid or dashed, so every pixel of a colour belongs to two
    curves and any trace of that colour would be a mixture of them. The tracer
    recognizes the situation from the colours and line styles reported by the language
    model and declares all four arms untraceable; no pixel trace is drawn. The crosses
    are the language-model readings of each arm, which are not precise enough to stand
    in for the trace: reconstructed from them alone, the two ADX arms admitted datasets
    only after the curve tolerance and the second-order constraint had been relaxed by
    several levels, and the two placebo arms admitted none.}
    \label{fig:S_untraceable}
\end{figure}

\paragraph{Category C: when the printed totals are hard to meet.} The seven
partially reconstructed arms (Table~\ref{tab:partial}) are not failures of the
reading: each has at least one admissible IPD, which by construction reproduces its
risk table, follows its curve and meets every printed summary. They are arms in which
such IPD are rare. In six of them the binding constraint is second-order, most often
the number of events printed beside the figure: a reconstruction can follow the curve
and the risk table and still carry a few events more or fewer than printed, because the
number of events is decided in large part by the censoring drawn in the sparse tail of
the curve, where the risk table gives little guidance. In NCT02107703 the printed median
and its interval bind as often as the number of events. In the seventh arm, the single
arm of NCT02954536, the curve constraint binds in the interval from 24 to 30 months,
where the traced curve and the risk table are hard to reconcile. A smaller portfolio
is a weaker description of the uncertainty the figure leaves, but not a less faithful
one: every IPD in it meets the full criterion of Section~2 of the main text.

\begin{table}[!ht]
\centering
\begin{threeparttable}
\caption{The seven arms with a partial portfolio.\label{tab:partial}}
\begin{tabular}{@{}llrl@{}}
\toprule
Figure & Arm & IPD & Most frequent failure\\
\midrule
NCT02107703, Fig.~1B & Abemaciclib $+$ fulvestrant & 1 & median and interval; events\\
NCT02954536, Fig.~2B & Single arm & 49 & curve, 24--30 months$^{a}$\\
NCT03615326, Fig.~1A & Pembrolizumab & 7 & events (267 printed)\\
NCT03653507, Fig.~3A & Placebo $+$ CAPOX & 3 & events (174 printed)\\
NCT04639986, Fig.~2C & Chemotherapy & 42 & events (89 printed)\\
NCT04662710, Fig.~3A & Chemotherapy & 26 & events (306 printed)\\
NCT05008783, Fig.~ED3B & Placebo & 39 & events (85 printed)\\
\bottomrule
\end{tabular}
\begin{tablenotes}
\item[$^{a}$] After the curve tolerance $\delta_s$ was relaxed from $0.02$ to $0.03$.
\end{tablenotes}
\end{threeparttable}
\end{table}

\section{Uncertainty and identification}\label{sec:S5}
This section states what a published figure determines about the data behind it. It first
defines the uncertainty of a reconstruction, then treats the idealized case of a figure read
without error, $\delta_t=\delta_s=0$, then the admissible set under real digitization, and
last the censoring marks.

\subsection*{S5.1 Definitions}
A dataset of one arm is a multiset $x=\{(y_i,\Delta_i)\}_{i=1}^{n^*}$ with $y_i>0$ and
$\Delta_i\in\{0,1\}$; patients are exchangeable, so two datasets that differ by a permutation
of patients are the same. Let $\mathcal X$ be the set of all such datasets. For $x\in\mathcal X$
let $u_1<\dots<u_m$ be its distinct event times,
$d_\ell(x)=\#\{i:y_i=u_\ell,\Delta_i=1\}$ be the number of events at time $\ell$, and $N_x(t)=\#\{i:y_i\ge t\}$ the number at risk at
$t$ and $n_\ell(x)=N_x(u_\ell)$. A patient censored at an event time is counted in risk the risk set at
that time. The Kaplan--Meier estimate is
$S_x(t)=\prod_{\ell:u_\ell\le t}\{1-d_\ell(x)/n_\ell(x)\}$. All observations are assumed to lie
in $[0,t_{\mathrm{end}}]$, where $t_{\mathrm{end}}$ is the end of the drawn curve.

A figure read without error yields the observation
\[
O(x)=\Bigl(S_x \text{ on } [0,t_{\mathrm{end}}],\; \bigl(N_x(t_k)\bigr)_{k=0}^{K},\; \Sigma(x)\Bigr),
\]
where $\Sigma(x)$ collects the printed summaries computed from $x$: the median, its
Brookmeyer--Crowley interval with Greenwood's variance, and the number of events
$\sum_\ell d_\ell(x)$. A publication printing $o$ is \emph{consistent} if $o=O(x^0)$ for the
original data $x^0$.

\begin{definition}[Admissible set, uncertainty, identification]\label{def:uncertainty}
Under perfect digitization the admissible set of a published observation $o$ is
$\mathcal A_0(o)=\{x\in\mathcal X:O(x)=o\}$; under real digitization, $\mathcal A_\delta$ is the
set of datasets satisfying constraints (1)--(3). For a functional
$\theta:\mathcal X\to\mathbb R$ and an admissible set $\mathcal A$, the
\emph{reconstruction uncertainty} of $\theta$ is $U(\theta)=\{\theta(x):x\in\mathcal A\}$.
The functional is \emph{identified} if $U(\theta)$ is a single point, and the data are
identified if $\mathcal A=\{x^0\}$.
\end{definition}

A portfolio $\tilde D_1,\dots,\tilde D_\Omega$ drawn from $\mathcal A$ estimates $U(\theta)$ from
the inside by $\{\theta(\tilde D_1),\dots,\theta(\tilde D_\Omega)\}$; this is the spread
reported in the main text, where $\Omega=50$.

\subsection*{S5.2 When the admissible set is empty}
If the publication is consistent, $x^0\in\mathcal A_0(o)$, so $\mathcal A_0(o)\neq\emptyset$;
under real digitization, Proposition~1 of the main text gives the same conclusion whenever every
reading lies within its tolerance. Consequently an empty admissible set obtained from an exact
reading means that $o$ is not in the image of $O$: the printed quantities were not all computed
from one dataset by the conventions above. Examples are summaries taken at another data cut-off,
an event count that includes events after $t_{\mathrm{end}}$, an interval computed by another
method, or numbers at risk counted with another convention.

\subsection*{S5.3 Identification under perfect digitization}
Let $\mathcal P=\{u_1,\dots,u_m\}\cup\{t_0,\dots,t_K\}\cup\{t_{\mathrm{end}}\}$ with elements
$0=\pi_0<\pi_1<\dots<\pi_J=t_{\mathrm{end}}$, and call $C_j=[\pi_j,\pi_{j+1})$,
$j=0,\dots,J-2$, and $C_{J-1}=[\pi_{J-1},t_{\mathrm{end}}]$ the \emph{cells}. For each event time write the relative height of the step,
$h_\ell=1-S(u_\ell)/S(u_\ell^-)=d_\ell/n_\ell$, and let $m_\ell=n_{\ell-1}-d_{\ell-1}$
($m_1=n^*$) bound $n_\ell$ from above.

\noindent\textbf{(A1)} For every $\ell$, $(d_\ell,n_\ell)$ is the only pair of integers with
$d/n=h_\ell$, $d\ge1$ and $n\le m_\ell$ that is compatible with the risk table. This holds in
particular if events are known not to be tied ($d_\ell=1$), or if $h_\ell=p_\ell/q_\ell$ in
lowest terms with $q_\ell>m_\ell/2$.

\begin{proposition}[What the figure identifies]\label{prop:identify}
Let the publication be consistent and (A1) hold. Then $x\in\mathcal A_0(o)$ if and only if
(a) $x$ has the events of $x^0$, that is the same $(u_\ell,d_\ell)$ for every $\ell$, and
(b) for every cell $C_j$, $x$ and $x^0$ have the same number of censored observations in $C_j$.
\end{proposition}

\begin{proof}
If $O(x)=O(x^0)$, the steps of $S_x$ give the event times and the ratios $h_\ell$, and (A1)
gives $(d_\ell,n_\ell)$, hence (a). Then $N_x$ is known at every point of $\mathcal P$ below
$t_{\mathrm{end}}$: $N_x(u_\ell)=n_\ell$ and $N_x(t_k)=r_k$. For $j\le J-2$ the number of
censored observations in $C_j$ equals $N_x(\pi_j)-N_x(\pi_{j+1})$ minus the events at $\pi_j$;
in the last cell it equals $N_x(\pi_{J-1})$ minus the events at $\pi_{J-1}$, since no
observation exceeds $t_{\mathrm{end}}$. These numbers are therefore the same for $x$ and $x^0$,
hence (b). Conversely, (a) and (b) imply $N_x(\pi)=N_{x^0}(\pi)$ for every
$\pi\in\mathcal P$ below $t_{\mathrm{end}}$, so $n_\ell(x)=n_\ell(x^0)$, the
Kaplan--Meier estimates agree, the numbers at risk at the $t_k$ agree, and $\Sigma(x)=\Sigma(x^0)$
because the median, the Greenwood variance and the event count depend on $x$ only through
$\{(u_\ell,d_\ell,n_\ell)\}$.
\end{proof}

\begin{corollary}[Uniqueness]\label{cor:unique}
Under the assumptions of Proposition~\ref{prop:identify}, $\mathcal A_0(o)=\{x^0\}$ if and only
if $x^0$ has no censored observation. Otherwise $\mathcal A_0(o)$ is uncountable: each censored
observation may take any value in its cell, every cell having positive length.
\end{corollary}

\begin{corollary}[Identified functionals]\label{cor:identified}
Under the assumptions of Proposition~\ref{prop:identify}, every functional that depends on $x$
only through $\{(u_\ell,d_\ell,n_\ell)\}$ is identified: the Kaplan--Meier estimate at any
time, landmark survival, the restricted mean survival time up to $\tau\le t_{\mathrm{end}}$,
the median, its Brookmeyer--Crowley interval and the number of events. In particular, under
perfect digitization constraints (1) and (2) imply constraint (3), and the second-order
constraints bind only because real digitization is imperfect. Functionals that depend on the
censoring times, such as the reverse Kaplan--Meier median follow-up, are in general not
identified when $x^0$ has censored observations.
\end{corollary}

\begin{remark}[Comparisons between arms]\label{rem:twoarms}
For two arms $a$ and $b$ the admissible set is the product
$\mathcal A_0(o^a)\times\mathcal A_0(o^b)$. The log-rank statistic and the Cox partial likelihood
use, at each event time $u$ of arm $a$, the number at risk $N_{x^b}(u)$ in arm $b$. By
Proposition~\ref{prop:identify}, $N_{x^b}(u)$ is identified only if the cell of arm $b$ that
contains $u$ has no censored observation before $u$ in every admissible dataset; otherwise it
varies over $\mathcal A_0(o^b)$. The hazard ratio is therefore not identified in general, even
under perfect digitization; its uncertainty $U(\mathrm{HR})$ comes from the position of the
censorings of one arm relative to the events of the other.
\end{remark}

\subsection*{S5.4 The admissible set under real digitization}
With $\delta_t,\delta_s>0$ the counts are no longer determined by the reading. Call a
\emph{configuration} $\kappa$ the following discrete data: the number of events and of
censored observations in every fine interval $F_j$, their order within it, the events at which
the reconstructed curve and its Brookmeyer--Crowley limits first reach one half, and the order
of the reconstructed interval limits relative to $M^*_L$ and $M^*_U$. A configuration fixes
the Kaplan--Meier heights at every read time, the numbers at risk at the $t_k$ and the event
count; the remaining free quantities are the times. Given $\kappa$, constraints (1)--(3) are
linear inequalities in the times: the event windows $W_j$, the fine intervals and the order
within them, $|\tilde M-M^*|\le\varepsilon_t$, and the overlap condition, which for a fixed
order of the limits reads $|\cap|-\rho|\cup|\ge0$ with both lengths linear in the limits.
Hence, with patients matched by rank within each fine interval and status,
\begin{equation}\label{eq:union}
\mathcal A_\delta=\bigcup_{\kappa\in\mathcal K}\mathcal P_\kappa,
\end{equation}
where $\mathcal K$ is finite and every $\mathcal P_\kappa$ is a convex polyhedron. Under
perfect digitization and (A1), $\mathcal K$ has a single element and $\mathcal A_0$ is convex.
Three consequences describe how further admissible datasets follow from a few.
\begin{enumerate}
\item \emph{Averaging within a configuration.} If $x,x'\in\mathcal P_\kappa$ and
$\lambda\in[0,1]$, the dataset whose times are $\lambda y_i+(1-\lambda)y'_i$, with patients
matched as above, lies in $\mathcal P_\kappa$. Two admissible datasets of one configuration
thus yield infinitely many, which differ only in the positions of events and censorings
within their windows and cells; across configurations an average would have fractional
counts and is not a dataset.
\item \emph{Recombination across risk-table intervals.} Every admissible dataset has
$N(t_k)=r_k$ for all $k$, so the observations with $t_{k-1}\le y<t_k$ number
$r_{k-1}-r_k$ in each of them. Given admissible $x^{(1)},\dots,x^{(n)}$ and indices
$\ell_1,\dots,\ell_K\in\{1,\dots,n\}$, the dataset that takes these observations from
$x^{(\ell_k)}$ for each $k$ satisfies constraint (1). Constraints (2) and (3) must be checked
again, because the reconstructed curve at the start of an interval is a product over the
intervals before it, and the median, its interval and the event count depend on the whole
dataset. At most $n^K$ candidates arise in this way.
\item \emph{Finiteness.} Since $\mathcal K$ is finite, the admissible datasets that differ in
their counts are finitely many, while $\mathcal A_\delta$ itself is uncountable whenever some
$\mathcal P_\kappa$ has interior. The portfolio size $\Omega$ is the number of datasets
reported, not a bound on $\mathcal A_\delta$.
\end{enumerate}

\subsection*{S5.5 Censoring marks}
Corollary~\ref{cor:unique} locates the whole uncertainty of a perfectly digitized figure in the
censoring times, which many figures draw as tick marks.

\begin{proposition}[Censoring read]\label{prop:marks}
Under the assumptions of Proposition~\ref{prop:identify}, suppose in addition that the multiset
of censoring times $\{y_i:\Delta_i=0\}$ of $x^0$ is observed. Then $\mathcal A_0=\{x^0\}$.
\end{proposition}

\begin{proof}
By Proposition~\ref{prop:identify}(a) the events of every admissible dataset are those of
$x^0$, and its censored observations are now given.
\end{proof}

A partial reading also reduces the uncertainty. If marks are resolved only into counts within
sub-intervals, as when adjacent marks merge, the cells $C_j$ are refined by those sub-intervals
and Proposition~\ref{prop:identify} holds with the refined cells; $U(\theta)$ then shrinks as
the partition refines, and only the unresolved clusters remain free. With the censoring times
given, the search of Step~2 of the main text over $\tilde c_k$ is replaced by a deterministic
inversion of the curve. The marks are not used here because current vision-language models
cannot read them reliably (Discussion of the main text).